\DeclareMathOperator*{\argmax}{argmax}
\newtheorem{definition}{Definition}
\newtheorem{claim}{Claim}
\title{Improving Heterogeneous Model Reuse by Density Estimation}
\author{
Anke Tang $^{1,2}$
\and Yong Luo $^{1,2\ast}$
\and Han Hu $^3$
\and Fengxiang He $^4$
\and \\ Kehua Su $^1$\footnote{Corresponding authors: Yong Luo, Kehua Su.}
\and Bo Du $^{1,2}$
\and Yixin Chen $^5$
\and Dacheng Tao $^6$
\affiliations
$^1$ School of Computer Science, National Engineering Research Center for Multimedia Software, Institute of Artificial Intelligence and Hubei Key Laboratory of Multimedia and Network Communication Engineering, Wuhan University, China.
$^2$ Hubei Luojia Laboratory, Wuhan, China. \\
$^3$ School of Information and Electronics, Beijing Institute of Technology, China. \\
$^4$ JD Explore Academy, JD.com, Inc., China. \\
$^5$ Department of CSE, Washington University in St. Louis, USA. \\
$^6$ The University of Sydney, Australia.
\emails
\{anketang, luoyong\}@whu.edu.cn,
hhu@bit.edu.cn,
fengxiang.f.he@gmail.com, \\
\{skh, dubo\}@whu.edu.cn,
chen@cse.wustl.edu,
dacheng.tao@gmail.com
}
\begin{document}

\maketitle

\begin{abstract}
This paper studies multiparty learning, aiming to learn a model using the private data of different participants.
Model reuse is a promising solution for multiparty learning, assuming that a local model has been trained for each party.
Considering the potential sample selection bias among different parties, some heterogeneous model reuse approaches have been developed.
However, although pre-trained local classifiers are utilized in these approaches, the characteristics of the local data are not well exploited. 
This motivates us to estimate the density of local data and design an auxiliary model together with the local classifiers for reuse.
To address the scenarios where some local models are not well pre-trained, we further design a multiparty cross-entropy loss for calibration.
Upon existing works, we address a challenging problem of heterogeneous model reuse from a decision theory perspective and take advantage of recent advances in density estimation.
Experimental results on both synthetic and benchmark data demonstrate the superiority of the proposed method.
\end{abstract}

\section{Introduction}

In recent years, leveraging centralized large-scale data by deep learning has achieved remarkable success in various application domains.
However, there are many scenarios where different participants separately collect data, and data sharing is prohibited due to the privacy legislation and high transmission cost.
For example, in some specific applications, such as medicine and autonomous driving, learnable data is inherently privacy-related and decentralized, and each local dataset is often insufficient to train a reliable prediction model~\cite{savage2017calculating,rajkomar2019machine}.
Therefore, multiparty learning is proposed to learn a reliable model using separated private datasets without sharing trainable samples~\cite{NIPS2010_0d0fd7c6}.

Most of the existing multiparty learning systems focus on training a shared global model to simultaneously achieve satisfactory accuracy and protect data privacy. These systems usually assume that each party trains a homogeneous local model, e.g., training neural networks with the same architecture~\cite{shokri2015privacy}. This makes it possible to directly average model parameters or aggregate gradient information~\cite{warnat2021swarm,mcmahan2016federated,li2019rsa}.
Some other works assume that each party has already trained a local model on its local dataset, and then apply model reuse to learn a global model~\cite{NIPS2010_0d0fd7c6,yang_deep_2017}. A typical example is the heterogeneous model reuse (HMR) method presented in~\cite{Wu2019}. Since only the output predictions of local models are utilized to derive a global model, the data can be non-i.i.d distributed and the architectures of different local models can vary among different parties. In addition, training of the global model can be quite efficient and data transmission cost can be significantly reduced.

There also exist some other model reuse approaches that may be utilized for multiparty learning. For example, pre-trained nonlinear auxiliary classifiers are adapted to new object functions in \cite{li2012efficient}.
Alternatively, the simple voting strategy can be adapted and improved to ensemble local models~\cite{zhou2012ensemble,Wu2019}.
In addition to the local models, a few works consider the design of specification to assist model selection and weight assignment~\cite{Ding2020,9447164}.
However, some important characteristics of the local data, such as the data density information are simply ignored in these approaches.

This motivates us to propose a novel heterogeneous model reuse method from a decision theory perspective that exploits the density information of local data.
In particular, in addition to the local model provided by each party, we estimate the probability density function of local data and design an auxiliary generative probabilistic model for reuse.
The proposed model ensemble strategy is based on the rules of Bayesian inference.
By feeding the target samples into the density estimators, we can obtain confidence scores of the accompanying local classifier when performing prediction for these samples.
Focusing on the semantic outputs, the heterogeneous local models are treated as black boxes and are allowed to abstain from making a final decision if the confidence is low for a certain sample in the prediction.
Therefore, aided by the density estimation, we can assign sample-level weight to the prediction of the local classifier.
Besides, when some local models are insufficiently trained on local datasets, we design a multiparty cross-entropy loss for calibration.
The designed loss automatically assigns a larger gradient to the local model that provides a more significant density estimation, and thus, enables it to obtain faster parameter updates.

To summarize, the main contributions of this paper are:
\begin{itemize}
\item we propose a novel model reuse approach for multiparty learning, where the data density is explored to help the reuse of biased models trained on local datasets to construct a reliable global model;
\item we design a multiparty cross-entropy loss, which can further optimize deep global model in an end-to-end manner.
\end{itemize}

We conduct experiments on both synthetic and benchmark data for image classification tasks.
The experimental results demonstrate that our method is superior to some competitive and recently proposed counterparts~\cite{Wu2019,9447164}.
Specifically, we achieve a significant $17.4\%$ improvement compared with \cite{9447164} in the case of three strict disjoint parties on the benchmark data.
Besides, the proposed calibration operation is proven to be effective even when local models are random initialized without training.


\section{Related Work}

In this section, we briefly summarize related works on multiparty learning and model reuse.

\subsection{Multiparty Learning}
Secure multiparty computation (SMC) \cite{yao1986generate,lindell2005secure} naturally involves multiple parties. The goal of SMC is to design a protocol, which is typically complicated, to exchange messages without revealing private data and compute a function on multiparty data. SMC requires communication between parties, leading to a huge amount of communication overhead. The complicated computation protocols are another practical challenge and may not be achieved efficiently. Despite these shortcomings, the capabilities of SMC still have a great potential for machine learning applications, enabling training and evaluation on the underlying full dataset. There are several studies on machine learning via SMC \cite{juvekar2018gazelle,mohassel2018aby3,kumar2020cryptflow}. In some cases, partial knowledge disclosure may be considered acceptable and traded for efficiency. For example, a SMC framework~\cite{knott2021crypten} is proposed to perform an efficient private evaluation of modern machine-learning models under a semi-honest threat model.

Differential privacy \cite{dwork2008differential} and k-Anonymity \cite{sweeney2002k} are used in another line of work for multiparty learning. These methods try to add noise to the data or obscure certain sensitive attributes until the third party cannot distinguish the individual. The disadvantage is that there is still heavy data transmission, which does not apply to large-scale training. In addition to transmitting encrypted data, there are studies on the encrypted transmission of parameters and training gradients, such as federated learning~\cite{yang2019federated} and swarm learning~\cite{warnat2021swarm}.
Federated learning was first proposed by Google and has been developed rapidly since then, wherein dedicated parameter servers are responsible for aggregating and distributing local training gradients. Besides, swarm learning is a data privacy-preserving framework that utilizes blockchain technology to decentralize machine learning-based systems. However, these methods usually can only deal with homogeneous local models \cite{NIPS2010_0d0fd7c6,rajkumar2012differentially}.

\begin{figure}
  \centering
  \includegraphics[width=0.4\textwidth]{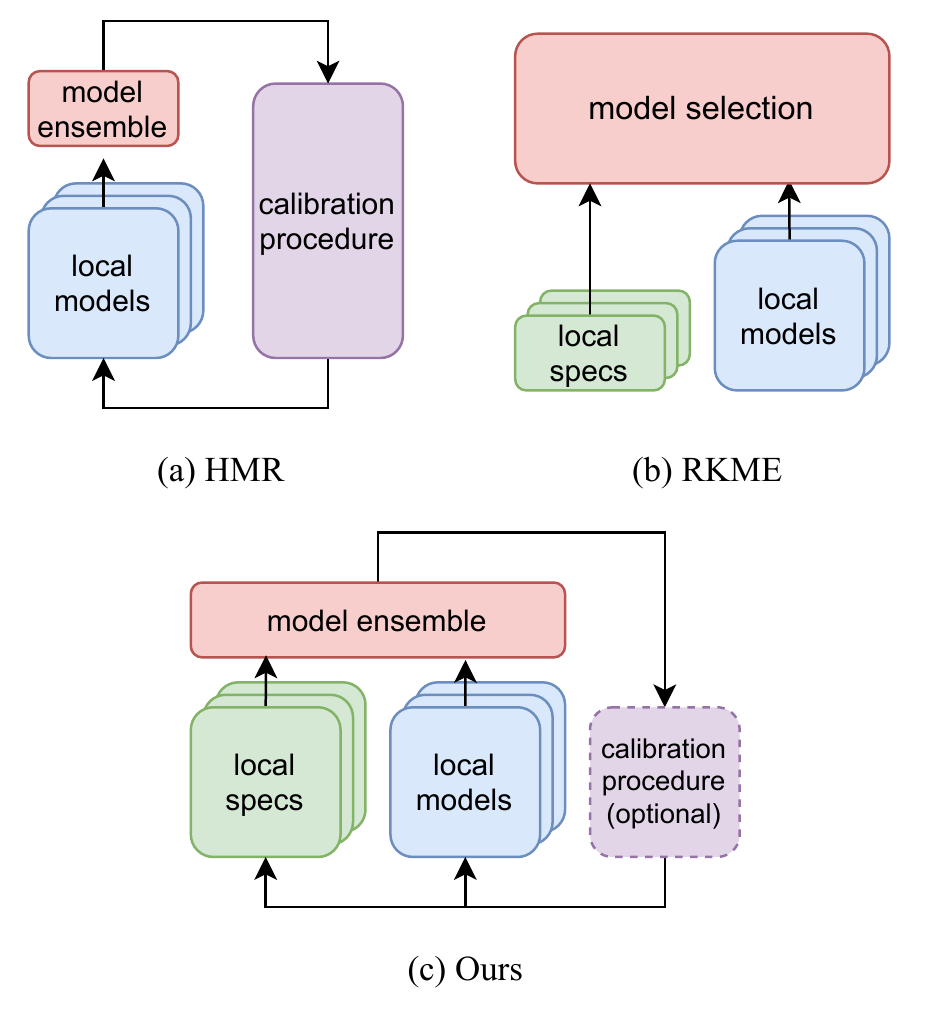}
  \caption[]{A comparison of our heterogeneous model reuse method with HMR~\cite{Wu2019} and RKME~\cite{9447164}. In HMR, multiple local models are simply combined and carefully calibrated to construct a global model. RKME does not require calibration, but some specifications that summarize local datasets are utilized for complicated model selection in the deployment phase. We utilize different types of specifications of local datasets in a different way, and design cheap aggregation strategy for model ensemble, where the calibration is optional due to satisfactory zero-shot test accuracy.}
  \label{fig:overview}
\end{figure}

\subsection{Model Reuse}
Model reuse aims to learn a reliable model for target task by reusing some related pre-trained models, often without accessing their original data~\cite{Zhou2016}.
Heterogeneous model reuse (HMR) for multiparty learning \cite{Wu2019} is the closest work to ours.
Based on the idea of learnware \cite{Zhou2016}, the black-box construction of a global model from the heterogeneous local models is performed. In HMR~\cite{Wu2019}, a global model is built based on the idea of Max-Model Predictor and then the model is carefully calibrated using a designed multiparty multiclass margin (MPMC-margin) loss. 
{
However, the accuracy is usually unsatisfactory under the zero-shot setting (model ensemble without any parameter or architecture calibration) due to the lack of exploitation of prior information.
In RKME~\cite{9447164}, each local classifier is assumed to be associated with a reduced kernel mean embedding as model specification, which largely improves the zero-shot test accuracy, but complicated model selection is required. Our method makes use of the data density specification (with privacy guarantee presented in section~\ref{sec:Privacy_Guarantee}), and a cheap model ensemble strategy is adopted to achieve very promising performance, even without calibration. Figure~\ref{fig:overview} is a comparison of our method with HMR and RKME.
}

The major difference between model reuse and some other related paradigms such as federated learning is that for the latter, information is exchanged among different parties in privacy-preserving ways during the training phase. While for model reuse, the training process of different parties is independent, and information is only exchanged and exploited in the form of models during the deployment phase \cite{yang2019federated,mcmahan2017communication,Ding2020}.

\section{The Proposed Method}

In this section, we first introduce the main notations and preliminaries of heterogeneous model reuse for the multiparty learning problem.

\subsection{Notations and Preliminaries}

We consider that there are $N$ participants in a multiparty learning system, and each participant $i\in [N]$ is known as a party and has its own local dataset $S_i = \{(x, y)\in\mathcal{X} \times\mathcal{Y}_i \}$ containing data samples and corresponding labels, where the labels are in $\mathcal{Y}_i\subseteq \mathcal{Y}$. Here, data exist in the form of isolated islands. Each party can only access its local dataset, so the underlying global dataset $S=\cup_{i=1}^N S_i$ cannot be directly observed by any parties. The participants attempt to cooperate in bridging the gap between model accuracy and training data accessibility, and obtaining a reliable global model. The whole model reuse progress is diagrammed as figure \ref{fig:overview}(d).

For a multiparty classification problem, each party $i$ holds its local classifier $\mathbf{F}_i: \mathcal{X} \to \mathcal{Y}_i$ which is trained on its local dataset $S_i$ and the types of classifiers can vary among parties.
The first challenge of learning the global model arises from the potential sample selection bias or covariate shift. A local classifier may misclassify an unseen input sample into the wrong class.
In fact, a local classifier would never be able to predict correctly if the local label space is not equal to the full label space, i.e. when $\mathbf{F}_i$ can only estimate posterior class probabilities $p(C_k|x, S_i)$ given $x$ for class $C_k\in\mathcal{Y}_i\subsetneq {\mathcal{Y}}$, we simply assign zero to $p(C_k|x, S_i)$ for $C_k \in \mathcal{Y} \setminus \mathcal{Y}_i$.

As for our method, in addition to the local classifier, each party should also fit a local density estimator $\mathbf{G}_i: \mathcal{X} \to \mathbb{R}$ on $S_i$ in an unsupervised manner.
The density estimator $\mathbf{G}_i$ is a generative probability model that learns to approximate the log-likelihood probability of the observations.
As we shall see in the following section, the log-likelihood and the dataset prior constitute the transition matrix that transforms the local class posterior probability to the global class posterior probability.
Therefore, the density estimators participate in the ensemble of local models in our model reuse framework together with the classifiers.
Besides, since $\mathbf{G}_i$ only provides the function of estimating the log-likelihood for given samples and does not need to generate samples, the privacy of the local dataset is guaranteed.

\subsection{Heterogeneous Model Reuse aided by Density Estimation}
We tackle the multiparty learning problem by utilizing some pre-trained local models to train a reliable global one.
Before combining local models, we shall dive into the decision theory of the multiparty learning problem to gain some insight.
We denote the joint probability distribution on the underlying global dataset as $p(x, C_k)$, and local joint distribution as conditional probability $p(x, C_k | S_i)$ given local dataset $S_i$.
The underlying global dataset is inaccessible and hence a directly estimation of $p(x, C_k)$ is intractable. A possible solution is to marginalize out $S_i$ to obtain $p(x,C_k)$:
\begin{equation}
  \label{eq:marginalization}
  p(x, C_k)
  = \mathbb{E}_{S_i\sim \mathcal{S}} \left[p(x, C_k | S_i)\right].
\end{equation}
For a classification task, we need to assign each observation $x$ to a certain class $C_k$. Such operation will divide the input space $\mathcal{X}$ into adjacent decision regions $\{\mathcal{R}_k\}$.
Our ultimate goal is to find the optimal decision policy $f^*\in\mathcal{X}\mapsto\mathcal{Y}$ that maximizes the probability of correct classification, i.e.,
\begin{equation}
  \label{eq:p_correct}
  P(\text{correct}) = \sum_{k} \int_{\mathcal{R}_k} p(x, f^*(x)) \, dx.
\end{equation}


It is straightforward to see that we can maximize the probability of correct classification if and only if we assign each $x$ to the class with the most considerable joint probability, since we can only assign $x$ to one class at a time.
Since the upper bound of Eq.~(\ref{eq:p_correct}) is $\int \max_{C_k} p(x,C_k) \,dx$~\cite{bishop2006pattern}, we have $f^*=\argmax_{C_k} p(\cdot,C_k)$.
By further expanding out $p(x, C_k)$ using marginalization Eq.~(\ref{eq:marginalization}), we can reformulate Eq.~(\ref{eq:p_correct}) as
\begin{equation}
  \label{eq:decision}
  P_{\max} = \int \sum_{i} p(C_k^* | x, S_i) p(x | S_i) p(S_i) \, dx,
\end{equation}
where $C_k^*=\argmax_{C_k} p(x,C_k)$.
In this way, we construct the global joint distribution by exploiting information about prior dataset distribution $p(S_i)$, local density/likelihood $p(x|S_i)$ and local class posterior $p(C_k | x, S_i)$.
To gain further insight into the global joint function, we multiply and divide the global likelihood $p(x)$ inner the right-hand integral, and rewrite Eq.~(\ref{eq:decision}) equivalently as

\vspace{-0.3cm}
\begin{align}
  \label{eq:global_posterior}
  & \int p(x) \underbrace{\left(\sum_{i} p(C_k^* | x, S_i) \frac{p(x | S_i) p(S_i)}{p(x)}\right)}_{p(C_k | x)} \, dx \\
  \label{eq:global_posterior_2}
           & =  \int p(x) \left(\sum_{i} p(C_k^* | x, S_i) \lambda_i \right) \, dx ,
\end{align}
where $\sum_{i=1}^N \lambda_i = 1$ and $\lambda_i = p(S_i|x)$ according to Bayes' theorem.
Compared with the original joint function Eq.~(\ref{eq:p_correct}), we now represent the global posterior probability $p(C_k| x)$ as a weighted sum of local posteriors.
Evidently, when there is only one party, $\lambda_1=1$, this joint representation degenerates to the familiar standard classification case, i.e., assigning $x$ to class $\argmax_{C_{k}} p(C_k|x)$.




When the dimension of input space $\mathcal{X}$ is small, estimation of the density $p(x|S_i)$ is trivial, and some popular and vanilla density estimation techniques, such as Gaussian mixture and kernel density estimators, from the classical unsupervised learning community can be directly adopted.
However, when the input dimension is high, such as in the case of image classification, landscape of the log-likelihood function $\mathbf{G}_i(x)$ for approximating the density can be extremely sharp due to the sparse sample distribution and thus intractable in practice.
We address this issue by extracting a sizeable common factor inner the summation, and rewriting the integrand in Eq.~(\ref{eq:decision}) equivalently as

\vspace{-5pt}
\begin{align}
  \label{eq:highdim}
  p(x|S_j) \sum_{i} p(C_k^* | x, S_i) p(S_i) e^{\log p(x | S_i) - \log p(x|S_j)},
\end{align}
where $j = \argmax_j \log p(x|S_{j})$.
In this way, we normalize the likelihood estimation to a reasonable interval $[0,1]$ without loss of information.

We then ensemble the local models according to Eq.~(\ref{eq:highdim}), as illustrated in Figure~\ref{fig:overview}(c), where $p(S_i)$ is proportional to the size of local dataset and sum up to 1, so that $p(S_i)=|S_i|/\sum_j |S_j|$.
Moreover, the class posterior $p(C_k|x,S_i)$ and density $p(x|S_i)$ can be approximated by the discriminate model $\mathbf{F}_i$ and generative model $\mathbf{G}_i$, respectively.
Finally, the global model can make a final decision, dropping the common factor $p(x|S_j)$, and the final decision policy can be written in a compact form by matrices as:
\begin{equation}
  \label{eq:obj}
  \argmax_{C}
  \left<\mathbf{F}^{(C)}(\cdot),
  \|\bm{S}\|_1 \odot
  \exp{\left(\mathbf{G}(\cdot) - {\bar{\mathbf{G}}(\cdot)} \right)}
  \right>,
\end{equation}
where $\bar{\mathbf{G}} = \argmax_i \mathbf{G}_i(\cdot)$ and $\odot$ is the Hadamard product.
Hereafter, we denote this inner product as the decision objective function $J(C)$ for simplicity.
The main procedure of our model reuse algorithm is summarized in Algorithm~\ref{alg:reuse}.

The following claim shows that our global model can be considered as a more general version of the max-model predictor defined in \cite{Wu2019}.

\begin{claim}
Let $\lambda_i=\delta_{\argmax_i p(x|S_i)}^i$, Eq.(\ref{eq:global_posterior_2}) would degenerate to a max-model predictor.
\end{claim}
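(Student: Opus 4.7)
The plan is to substitute the indicator weights $\lambda_i = \delta^{i}_{i^{*}}$, where $i^{*} = \argmax_i p(x|S_i)$, directly into Eq.~(\ref{eq:global_posterior_2}) and observe that the weighted sum over parties collapses to a single term, namely the local posterior of the party whose density estimator assigns the largest likelihood to $x$. The resulting decision rule then reads
\begin{equation*}
  f^{*}(x) \;=\; \argmax_{C_k}\, p\bigl(C_k \mid x, S_{i^{*}(x)}\bigr), \qquad i^{*}(x) = \argmax_{i}\, p(x \mid S_i),
\end{equation*}
which is precisely the form of the max-model predictor: first pick the party whose data region $x$ is most likely to belong to, then defer the class prediction to that party's local classifier.

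To complete the proof I would then verify that this matches the max-model predictor formulation of Wu~\shortcite{Wu2019}. In their setup, each local model's output is compared via some confidence score on the input $x$, and the predictor with the highest score is selected to produce the final label. Under our probabilistic reformulation, the natural confidence score for the $i$-th party on input $x$ is exactly $p(x|S_i)$ (the local density), so the selection rule $\argmax_i p(x|S_i)$ coincides with their max selection step, and the subsequent classification step agrees by construction. The identification $p(S_i|x) = \delta^{i}_{i^{*}(x)}$ can additionally be justified as the limiting case of Bayes' rule $p(S_i|x) \propto p(x|S_i)p(S_i)$ when one of the $p(x|S_i)$ dominates the others (e.g., strictly disjoint input supports), which gives some semantic justification for viewing the max-model predictor as a degenerate special case.

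The only genuine subtlety — and the one step I would be careful about — is checking that the two $\argmax$ operations (one over parties, one over classes) commute with the summation collapse in the intended way, i.e., that $C_k^{*}$ as defined in Eq.~(\ref{eq:decision}) still equals the class maximizing $p(C_k|x,S_{i^{*}})$ once the delta weights are imposed. This holds because under $\lambda_i = \delta^{i}_{i^{*}}$ the global posterior $p(C_k|x) = \sum_i p(C_k|x,S_i)\lambda_i$ reduces pointwise to $p(C_k|x,S_{i^{*}(x)})$, so maximizing over $C_k$ on the left and on the right produce the same class. No further estimation or inequality is needed; the statement is a direct definitional reduction of Eq.~(\ref{eq:global_posterior_2}) under the indicator choice of weights.
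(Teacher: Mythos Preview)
Your mechanical reduction --- substitute the Kronecker delta, collapse the sum, obtain the single-party posterior $p(C_k\mid x,S_{i^{*}(x)})$ with $i^{*}(x)=\argmax_i p(x\mid S_i)$ --- is exactly what the paper does. The approaches coincide up to that point.

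Where you diverge from the paper is in the final identification. The paper does not stop at the two-stage ``select party by density, then classify'' rule you describe; it goes one step further and asserts that the resulting decision policy is $\argmax_{C_k}\max_{S_i} p(C_k\mid x,S_i)$, which is the literal max-model predictor of \cite{Wu2019} (joint maximization over classes and parties using the \emph{classifier} scores, not the density). Your rule and that rule are not the same object in general: yours gates on $p(x\mid S_i)$, theirs gates on $p(C_k\mid x,S_i)$. You implicitly acknowledge this by reinterpreting the ``confidence score'' in Wu's predictor as the local density, but that reinterpretation is a modeling choice, not a derivation. The paper, for its part, simply asserts the joint-max form after the collapse without justifying the passage from one to the other, so the gap is present in both arguments --- you are just more explicit about where the hand-waving lives. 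If you want to match the paper's stated conclusion exactly, you should write the final policy as $\argmax_{C_k}\max_{S_i} p(C_k\mid x,S_i)$ and note (as the paper tacitly does) that the claim is about recovering the \emph{form} of a max-model predictor rather than proving pointwise equality of the two selection rules.
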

\begin{proof}
If we assign $\lambda_i$ to $\delta_{\argmax_i p(x|S_i)}^i$, then according to Eq.~(\ref{eq:global_posterior_2}), we have
\begin{equation}
\label{eq:maxmodel}
    P = \int p(x) p(C_k^*|x,{\argmax_{S_i} p(x|S_i)}) \,dx.
\end{equation}
Recall the definition of $C_k^*$ and by dropping the common factor $p(x)$, the decision policy Eq.~(\ref{eq:maxmodel}) can be characterized as $\argmax_{C_k} \max_{S_i} p(C_k|x,S_i)$.
\end{proof}
By sharing selected samples among parties, HMR~\cite{Wu2019} makes $p(\cdot|S_i)$ get closer to each other in the Hilbert space so that $p(C_k|x,{\argmax_{S_i} p(x|S_i)})\rightarrow p(C_k|x)$.

\begin{algorithm}[!t]
  \caption{Heterogeneous Model Reuse aided by Density Estimation (without Calibration).}
  \label{alg:reuse}
  \textbf{Input}:
  \begin{itemize}[leftmargin=2em]
    \item Local classifiers $\mathbf{F}_1, \mathbf{F}_2, \dots, \mathbf{F}_N$ \\
          \null\hfill {$\triangleright$ e.g. CART, SVM, MLP, CNN\hspace*{1.5em}}
    \item Local log-likelihood estimators $\mathbf{G}_1, \mathbf{G}_2, \dots, \mathbf{G}_N$ \\
          \null\hfill {$\triangleright$ e.g. Kernel Density, Real-NVP, VAE\hspace*{1.5em}}
    \item Sizes of local datasets $|S_1|, |S_2|, \dots, |S_N|$
    \item Query samples ${x_1,x_2,\dots, x_m}$
  \end{itemize}
  \textbf{Output}: Labels of classification
  \begin{algorithmic}[1] 
    \FOR{$j=1,2,\dots,N$}
    \STATE initialize dataset prior probability by normalization:\\
    \hspace*{1em} $p_j:= |S_j|/\sum_i |S_i|$
    \FOR{$i=1,2,\dots,m$}
    \STATE 	for each class $k$ calculate local posterior probability, fill zeros for missing entries: \\
    \hspace*{1em} $\mathcal{G}_{ijk} := \mathbf{F}_j^{(C_k)}(x_i) \,\, \text{or} \,\, 0$
    \STATE calculate local log-likelihood:\\
    \hspace*{1em} $\mathcal{F}_{ij} := \mathbf{G}_j(x_i)$
    \ENDFOR
    \ENDFOR
    \FOR{$i=1,2,\dots,m$}
    \STATE calculate objective function for each class $k$: \\
    \hspace*{1em} $J_{ik} := \sum_j p_j \mathcal{F}_{ijk} \exp(\mathcal{G}_{ijk}-\max_j{\mathcal{G}_{ijk}})$\\
    \STATE make decision for query sample $x_i$:\\
    \hspace*{1em} $C_i:=\argmax_{k} J_{ik}$
    \ENDFOR
    \RETURN $\textbf{C}=C_1,C_2,\dots,C_m$
  \end{algorithmic}
\end{algorithm}

\subsection{Multiparty Cross-Entropy Loss}

In this subsection, we design a novel \emph{multiparty cross-entropy loss} (MPCE loss), which enables us to calibrate the classifiers in the compositional deep global model in an end-to-end manner.
We use $\theta$ and $\mu$ to denote the sets of classifiers' and generative models' parameters respectively, and we aim to find optimal $\theta$ so that we approximate the actual class posterior function well.
A popular way to measure the distance between two probabilities is to compute the Kullback-Leibler (KL) divergence between them.
With a slight abuse of notation, we characterize the KL divergence between true class posterior and approximated class posterior as

\vspace{-0.3cm}
\begin{align}
  \text{KL} & ( p \| p_{\theta})
  = \sum_{C\in\mathcal{C}} p(C|x) \log{\frac{p(C|x)}{p_\theta(C| x)}} \\
  \label{eq:KL}
            & = \mathbb{E}_{C\sim p} \left[\log p(C|x)\right]
  - \mathbb{E}_{C\sim p}\left[\log {p_{\theta}(C | x)} \right].
\end{align}
The first term in Eq.~(\ref{eq:KL}) is fixed and associated with the dataset.
Besides, as for a classification task, $p(\cdot|x)$ is a Kronecker delta function of class $C$, so this expectation term is $0$.
We define the \emph{MPCE loss} as the second term in Eq.~(\ref{eq:KL}), that is

\vspace{-0.3cm}
\begin{align}
  \mathcal{L}_{\text{mpce}}(\hat{y}, y)
   & = - \mathbb{E}_{C\sim  p}\left[\log {p_{\theta}(C | x)} \right] \\
   & = -\sum_{k} \delta_{k}^{y}  \log{p_{\theta}(C_k | x)},
\end{align}
where $\delta$ is the Kronecker delta function, $\delta_k^y$ is $1$ if $k$ and $y$ are equal, and $0$ otherwise.
By utilizing the global posterior presented in Eq.~(\ref{eq:global_posterior}), we can further expand out the loss to get

\vspace{-0.3cm}
\begin{align}
  \mathcal{L}_{\text{mpce}}(\hat{y}, y) = - \log \left\{\sum_{i} p_\theta(C_y | x, S_i) \frac{p_\mu(x | S_i) p(S_i)}{p(x)}\right\}.
\end{align}

\begin{claim}
For single party case, the MPCE loss degenerates to the standard cross-entropy loss.
\end{claim}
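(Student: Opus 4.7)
The plan is to specialize the expanded MPCE loss expression to $N=1$ and show that all marginalization factors collapse to unity, leaving the familiar cross-entropy formula.

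First I would set $N=1$ in the sum appearing inside the logarithm of $\mathcal{L}_{\text{mpce}}$. Since there is only a single party, the dataset prior satisfies $p(S_1) = 1$, because $\sum_{i=1}^N p(S_i)=1$ collapses to a single term. Next I would invoke the marginalization identity Eq.~(\ref{eq:marginalization}) applied to the marginal density of $x$: with $N=1$ we get $p(x) = \mathbb{E}_{S_i\sim\mathcal{S}}[p(x|S_i)] = p_\mu(x|S_1)$. Consequently, the multiplicative weight
\begin{equation*}
\frac{p_\mu(x|S_1)\, p(S_1)}{p(x)} = 1,
\end{equation*}
which agrees with the Bayesian view $\lambda_1 = p(S_1|x) = 1$ used to derive Eq.~(\ref{eq:global_posterior_2}).

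Substituting this back into the expanded MPCE expression yields
\begin{equation*}
\mathcal{L}_{\text{mpce}}(\hat{y}, y) = -\log p_\theta(C_y \mid x, S_1),
\end{equation*}
and since in the single-party setting $S_1$ coincides with the full training set, $p_\theta(C_y|x,S_1) = p_\theta(C_y|x)$. This is exactly the standard cross-entropy loss $-\sum_k \delta_k^y \log p_\theta(C_k|x)$ evaluated at the true label $y$.

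This claim is essentially bookkeeping rather than a substantive mathematical result, so I do not anticipate any genuine obstacle; the only care required is to make explicit that the generative model $\mathbf{G}_1$ and its parameters $\mu$ drop out of the loss when $N=1$, because the density ratio inside the logarithm is identically $1$ rather than merely approximately so. I would state this explicitly to justify that the calibration of $\theta$ in the single-party case is decoupled from any density-estimation component, recovering the classical supervised learning objective as a limit case of our framework.
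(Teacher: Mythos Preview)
Your proposal is correct and follows essentially the same approach as the paper, which observes in one line that for a single party $p(x|S_i)p(S_i)=p(x,S_i)=p(x)$, so the density ratio collapses to $1$. Your version is simply a more explicit unpacking of this same identity.
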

\begin{proof}
Evidently, when there is only one party, we have $p(x|S_i)p(S_i)=p(x,S_i)=p(x)$.
\end{proof}

Next, We follow the same argument about the high dimensional situation and apply the normalization trick presented in Eq.~(\ref{eq:highdim}), to obtain
\begin{equation}
\label{eq:l_mpce}
  \begin{split}
    &\mathcal{L}_{\text{mpce}}(\hat{y},y) =
    \left\{ - {\bar{\mathbf{G}}_{\mu}(x)} +\log p(x)\right\} \\
    &-\log
    \left<\mathbf{F}_{\theta}^{(C_y)}(x),
    \|\bm{S}\|_1 \odot
    \exp{\left(\mathbf{G}_{\mu}(x) - {\bar{\mathbf{G}}_{\mu}(x)} \right)}
    \right>.
  \end{split}
\end{equation}
Notice that in Eq.~(\ref{eq:l_mpce}), the last term in the $\log$ operation is the same as that in Eq.~(\ref{eq:obj}), minimizing the negative-log MPCE loss will maximize the policy objective function.
{
At step $t$, we can update the model parameters $[\theta,\mu]^\top$ using $- \eta g_t$, where $\eta$ is the learning rate hyperparameter and $g_t$ is the gradient defined as
\begin{equation} 
  g_t=\left[\begin{array}{cc}
      \nabla_\theta \mathcal{L}_{\text{mpce}}(\hat{y},y) &
      \sum_{\forall_i y\in \mathcal{Y}_i} \nabla_\mu \mathcal{L}_{\text{gen}}^{(i)}(x)
    \end{array}	\right]^\top.
\end{equation}
Here, $\mathcal{L}_{\text{gen}}^{(i)}$ is some unsupervised loss (such as negative log-likelihood for normalizing flows) of the $i$-th density estimator.
This can be conducted in a differential privacy manner by clipping and adding noise to $g_t$
\begin{equation} 
    \overline{g}_t = g_t / \max\{1,\|g_t\|_2/C\} + \mathcal{N}(0,\sigma^2C^2I).
\end{equation}
}
By optimizing the MPCE loss, the gradient information is back-propagated along a path weighted by the density estimation.
The party that provides more significant density estimation for calibration would obtain larger gradients and faster parameter updates.

\subsection{Privacy Guarantee}
\label{sec:Privacy_Guarantee}
In this paper, the data density is utilized as model specification, and this may lead to privacy issue.
However, since we can conduct density estimation in a differential privacy manner, the privacy can be well protected.

In particular, differential privacy~\cite{dwork2006calibrating,dwork2011firm,dwork2014algorithmic}, which is defined as follows, has become a standard for privacy analysis.
\begin{definition}[\cite{dwork2006calibrating}]
A randomized algorithm $\mathcal{M}: \mathcal{D} \mapsto \mathcal{R}$ satisfies $(\epsilon, \delta)-$differential privacy (DP) if and only if for any two adjacent input datasets $D, D'$ that differ in a single entry and for any subset of outputs $\mathcal{S}\subset \mathcal{R}$ it holds that
\begin{equation}
    \operatorname{Pr}[\mathcal{M}(D)\in \mathcal{S}]\leq e^{\epsilon} \operatorname{Pr}[\mathcal{M}(D')\in \mathcal{S}]+\delta.
\end{equation}
\end{definition}
It has been demonstrated that density estimators can be trained in a differential privacy manner to approximate arbitrary, high-dimensional distributions based on the DP-SGD algorithm~\cite{abadi2016deep,waites2021differentially}.
Therefore, the proposed model aggregation strategy is guaranteed to be $(\epsilon,\delta)$-differentially private when local models are pre-trained in a differential privacy manner, where $\epsilon$ and $\delta$ are the training privacy budget and training privacy tolerance hyperparameters, respectively.

\section{Experiments}

In this section, we evaluate our proposed method using a two-dimensional toy dataset and a popular benchmark dataset.
The basic experimental setup for our 2D toy and benchmark experiments is similar to that adopted in~\cite{Wu2019}.
Experiments on the benchmark data demonstrate our model reuse algorithm and end-to-end calibration process on various biased data distribution scenarios.
The code is available at \url{https://github.com/tanganke/HMR}.

\subsection{Toy Experiment}

\iftrue
\begin{figure}
  \centering
  {
  \begin{subfigure}[b]{0.15\textwidth}
    \centering
    \includegraphics[width=\textwidth]{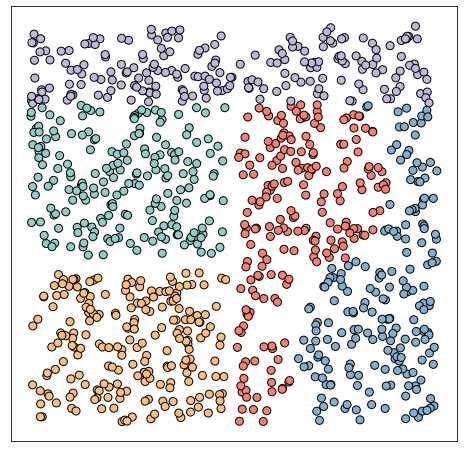}
    \caption{Train samples}
    \label{fig:toy_dataset}
  \end{subfigure}
  \begin{subfigure}[b]{0.15\textwidth}
    \centering
    \includegraphics[width=\textwidth]{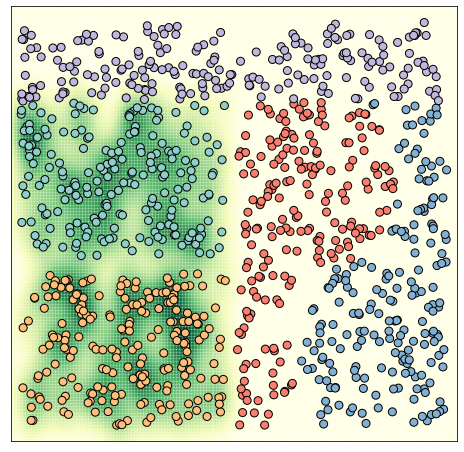}
    \caption{$p(x|S_1)$}
    \label{fig:toy_party_1_density}
  \end{subfigure}
  \begin{subfigure}[b]{0.15\textwidth}
    \centering
    \includegraphics[width=\textwidth]{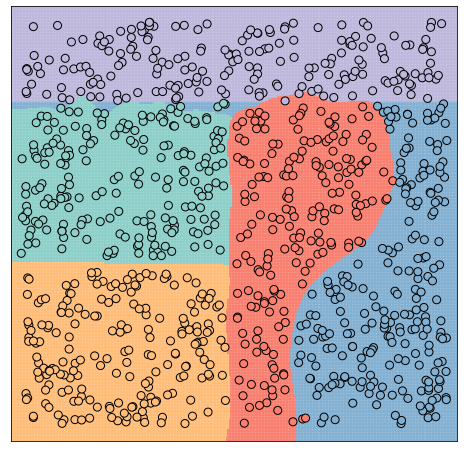}
    \caption{Ours-0: $99.1\%$}
    \label{fig:toy_decision_region}
  \end{subfigure}
  \\
  \begin{subfigure}[b]{0.15\textwidth}
        \centering
        \includegraphics[width=\textwidth]{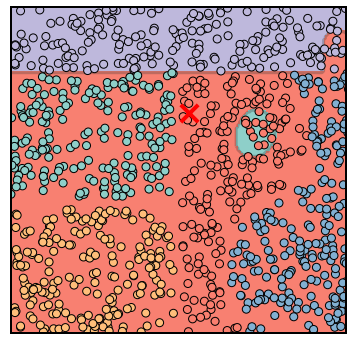}
        \caption{HMR-0: $42.4\%$}
        \label{fig:HMR_0}
  \end{subfigure}
  \begin{subfigure}[b]{0.15\textwidth}
        \centering
        \includegraphics[width=\textwidth]{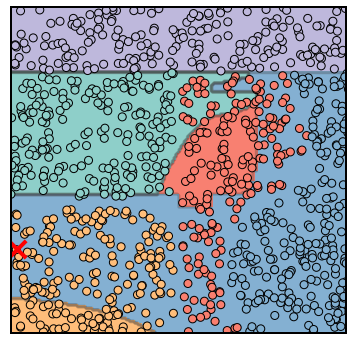}
        \caption{HMR-5: $71.2\%$}
        \label{fig:HMR_5}
  \end{subfigure}
  \begin{subfigure}[b]{0.15\textwidth}
        \centering
        \includegraphics[width=\textwidth]{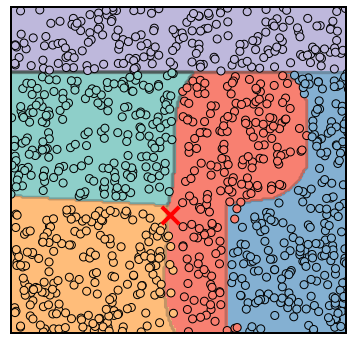}
        \caption{HMR-30: $98.9\%$}
        \label{fig:HMR_30}
  \end{subfigure}
  }
  \caption{
    Visualization results of 2D toy example.
    (a) The five-class 2D dataset.
    (b) The estimated density by party 1.
    (c) Decision boundary and accuracy of our method without calibration (iteration 0) on the testing data.
    (d-f) Decision boundary and test accuracy of HMR at iteration 0, 5 and 30.
    }
  \label{fig:toy_exp}
\end{figure}

\else

\begin{figure*}[!t]
  \centering
  \begin{subfigure}[b]{0.23\textwidth}
    \centering
    \includegraphics[width=\textwidth]{images/toy_dataset.png}
    \caption{}
    \label{fig:toy_dataset}
  \end{subfigure}
  \begin{subfigure}[b]{0.23\textwidth}
    \centering
    \includegraphics[width=\textwidth]{images/toy_decision_region.png}
    \caption{}
    \label{fig:toy_decision_region}
  \end{subfigure}
  \begin{subfigure}[b]{0.23\textwidth}
    \centering
    \includegraphics[width=\textwidth]{images/toy_obj_1.png}
    \caption{}
    \label{fig:toy_obj_begin}
  \end{subfigure}
  \begin{subfigure}[b]{0.23\textwidth}
    \centering
    \includegraphics[width=\textwidth]{images/toy_obj_2.png}
    \caption{}
  \end{subfigure}
  \begin{subfigure}[b]{0.23\textwidth}
    \centering
    \includegraphics[width=\textwidth]{images/toy_obj_3.png}
    \caption{}
  \end{subfigure}
  \begin{subfigure}[b]{0.23\textwidth}
    \centering
    \includegraphics[width=\textwidth]{images/toy_obj_4.png}
    \caption{}
  \end{subfigure}
  \begin{subfigure}[b]{0.23\textwidth}
    \centering
    \includegraphics[width=\textwidth]{images/toy_obj_5.png}
    \caption{}
    \label{fig:toy_obj_end}
  \end{subfigure}
  \begin{subfigure}[b]{0.23\textwidth}
    \centering
    \includegraphics[width=\textwidth]{images/toy_maxobj.png}
    \caption{}
    \label{fig:toy_obj_max}
  \end{subfigure}
  \caption{
    Visualization results of 2D toy example.
    (a) The five-class 2D dataset.
    (b) Decision boundary on testing dataset by the compositional global model.
    (c-g) The decision objective function $J(C)$ for each class $C\in\mathcal{Y}$. They share a common colorbar with (h).
    (h) The maximum objective function over classes, i.e. $\max_{C} J(C)$.
  }
  \label{fig:toy_exp}
\end{figure*}
\fi

\begin{figure}[!t]
  \centering
  \begin{subfigure}[b]{0.20\textwidth}
    \centering
    \includegraphics[width=\textwidth]{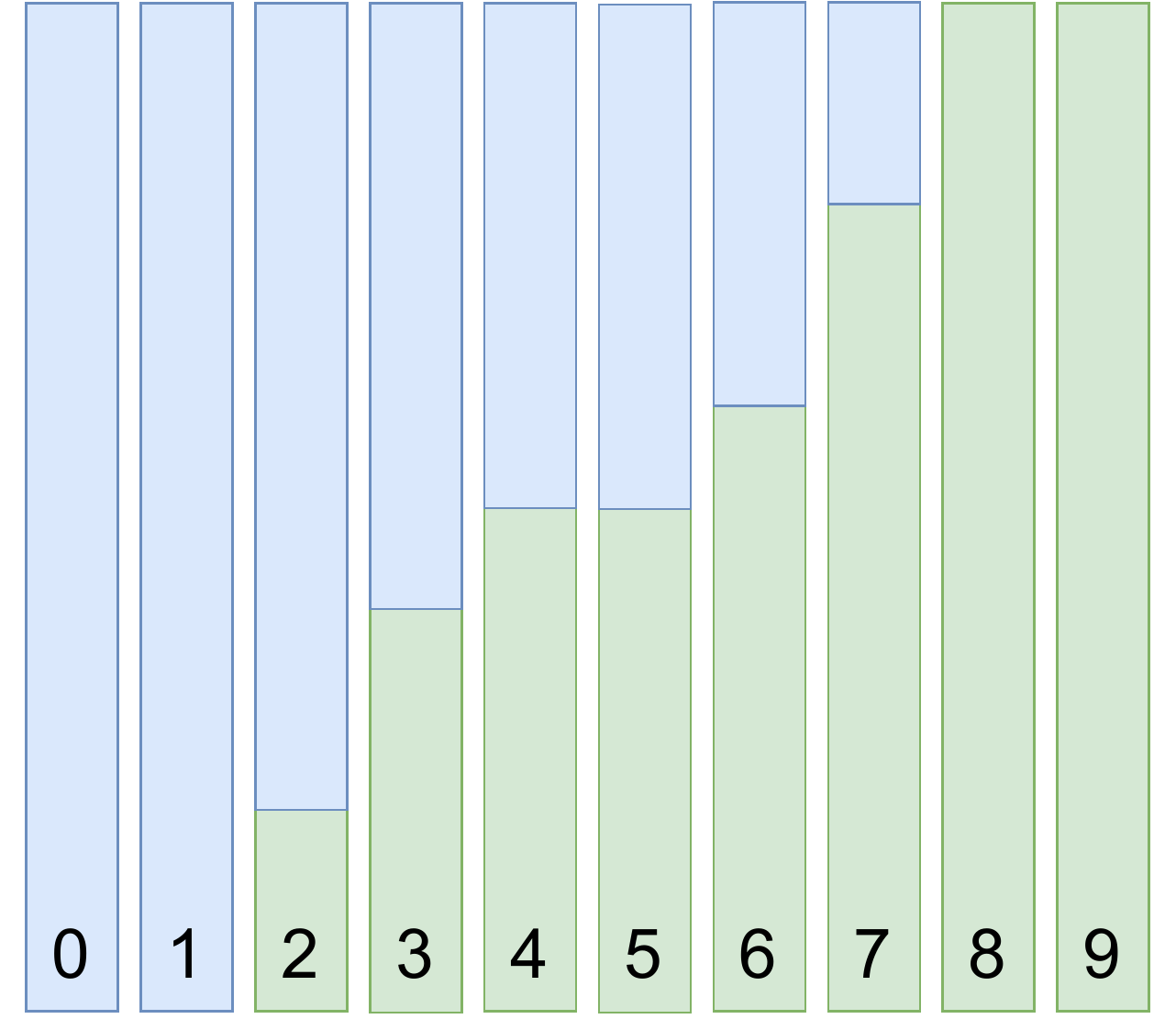} 
    \caption{A: 2 parties}
  \end{subfigure}
  \begin{subfigure}[b]{0.20\textwidth}
    \centering
    \includegraphics[width=\textwidth]{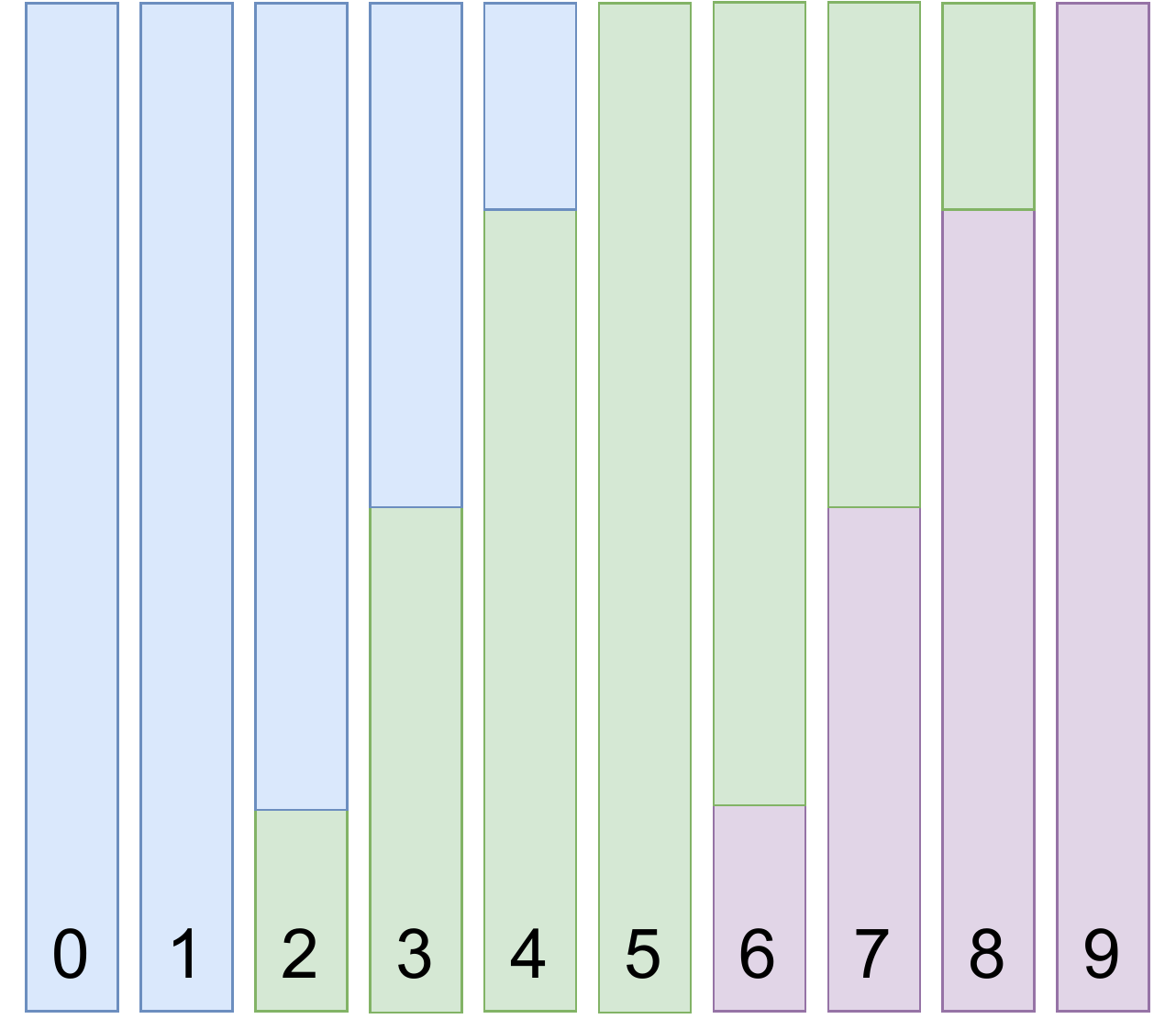} 
    \caption{B: 3 parties}
  \end{subfigure}
  \\
  \begin{subfigure}[b]{0.20\textwidth}
    \centering
    \includegraphics[width=\textwidth]{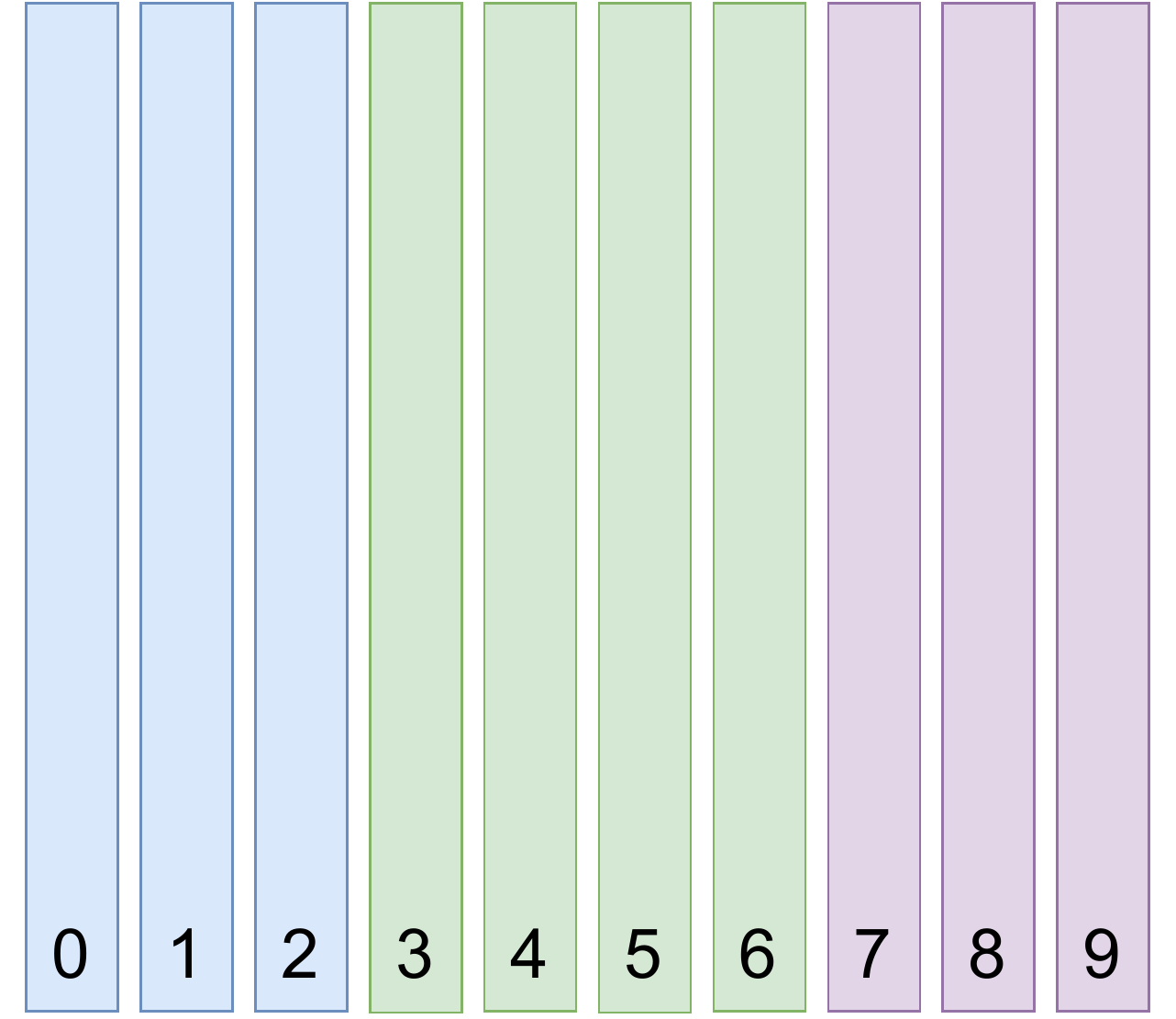} 
    \caption{C: 3 parties}
  \end{subfigure}
  \begin{subfigure}[b]{0.20\textwidth}
    \centering
    \includegraphics[width=\textwidth]{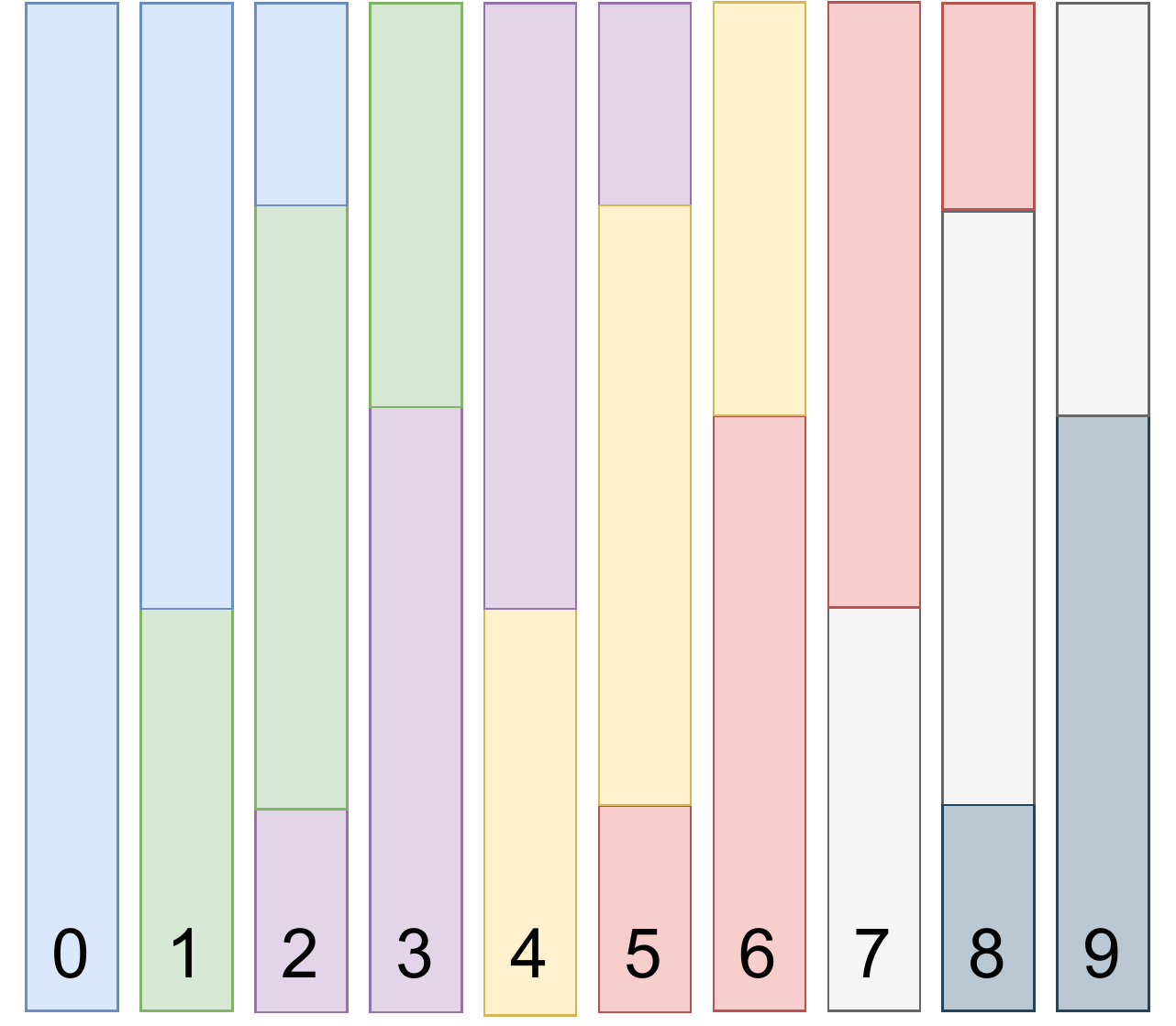} 
    \caption{D: 7 parties}
  \end{subfigure}
  \caption{
    Four experiment settings with different sample selection biases by dividing the training set of Fashion-MNIST. Each color represents a local dataset associated with a party.
  }
  \label{fig:benchmarking_setting}
\end{figure}

\begin{figure*}[!t]
  \centering{
    \includegraphics[width=0.85\textwidth]{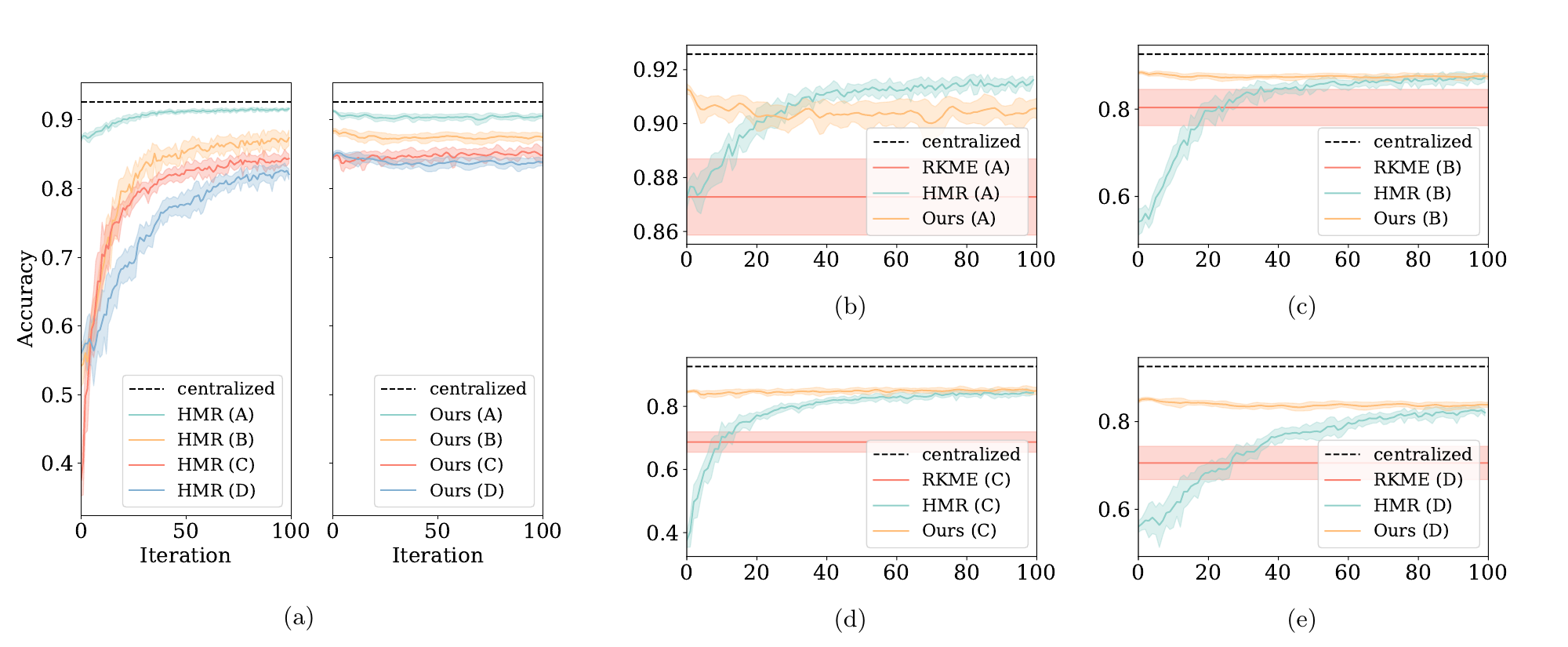}
  }
  \caption{The test accuracy curves over calibration iterations ($\text{avg.}\pm \text{std.}$).
    (a) overall performance of HMR (left) and our method (right) under the four multiparty settings.
    (b-e) performance of different compared approaches under each multiparty setting, where RKME is a constant value since it is inherently a method that cannot be subsequently calibrated.
  }
  \label{fig:benchmarking_result}
\end{figure*}

\begin{table*}[!t]
  \centering
  \begin{tabular}{c|cccc|c}
    \hline
    Setting & A                   & B                   & C                   & D                   & Average     \\
    \hline\hline
    \iftrue 
    RKME    & $87.3\pm1.4$      & $80.4\pm4.2$        & $68.7\pm3.2$        & $70.6\pm3.8$        & $76.7$      \\
    \fi
    HMR-1   & $87.3\pm0.4$      & $54.2\pm3.0$        & $37.6\pm2.3$        & $56.0\pm1.3$        & $58.8$      \\
    HMR-10  & $88.3\pm0.6$      & $66.0\pm3.1$        & $66.5\pm4.3$        & $59.4\pm4.0$        & $70.1$      \\
    HMR-50  & $91.2\pm0.3$      & $85.1\pm1.9$        & $82.1\pm0.9$        & $77.6\pm1.4$        & $84.0$      \\
    HMR-100 & $\bm{91.6\pm0.1}$ & $87.4\pm1.1$        & $84.4\pm0.6$        & $82.1\pm0.8$        & $86.4$      \\
    Ours    & $91.3\pm0.2$      & $\bm{88.4\pm0.4}$   & $\bm{84.6\pm0.3}$   & $\bm{84.7\pm0.4}$   & $\bm{87.2}$ \\
    \hline
  \end{tabular}
  \caption{
    Accuracy on benchmark data under four multiparty settings ($\text{avg.}\pm \text{std.}\%$).
    Here HMR-X represents the HMR method that has been calibrated for X rounds.
  }
  \label{tab:benchmarking_result}
\end{table*}

We first visualize our proposed method with a toy example.
Here, we create a 2D toy dataset with $2000$ points, each associated with a label from $5$ classes denoted by different colors.
The dataset is equally split into a training set and a test set, as shown in Figure~\ref{fig:toy_dataset}.

There are three parties in this toy example, each equipped with different local models.
The three parties use logistic regression, Gaussian kernel SVM, and gradient boosting decision tree for classification, respectively.
In addition, they all use kernel density estimator with bandwidth set to $0.1$ to estimate the log-likelihood function.
We implement the local models using the scikit-learn package \cite{scikit-learn}.
Each party can only access a subset of the complete training set as the local dataset.
The accessible samples are all the green and orange ones for party 1, all the red and blue ones for party 2, and all the blue and purple ones for party 3.


We first train the classifiers in a supervised manner and the kernel density estimators in an unsupervised manner on the corresponding local dataset.
Then we reuse these trained local models according to Algorithm~\ref{alg:reuse} to make final decisions on test samples.
Lastly, we analyze the zero-shot composition performance (without calibration) and compare with the most related work HMR~\cite{Wu2019}. The results are shown by Figure~\ref{fig:toy_exp}. From the results, we can see that the zero-shot composition accuracy reaches $99.1\%$, and the decision boundary is shown in figure~\ref{fig:toy_decision_region}. In contrast, the zero-shot accuracy of HMR is only $42.4\%$ and the performance is comparable to our method after $30$ rounds of calibrations.

\subsection{Benchmark Experiment}

In this set of experiments, we aim to understand how well our method compares to the state-of-the-art heterogeneous model reuse approaches for multiparty learning and the strength and weakness of our calibration procedure.
Specifically, we mainly compare our method with HMR~\cite{Wu2019} and RKME~\cite{9447164}.
\begin{itemize}
  \item HMR uses a max-model predictor as the global model together with a designed multiparty multiclass margin loss function for further calibration.
  \item RKME trains local classifiers and computes the \textit{reduced kernel mean embedding} (RKME) specification in the upload phase, assigns weights to local classifiers based on RKME specification, and trains a model selector for future tasks in the deployment phase.
\end{itemize}
In addition to multiparty learning, we train a centralized model on the entire training set for comparison.

We evaluate our method, 
HMR, and RKME on Fashion-MNIST \cite{xiao2017}, a popular benchmark dataset in the machine learning community, containing $70,000$ $28\times28$ gray-scale fashion product images, each associated with a label from $10$ classes.
The complete training set is split into a training set of $60,000$ examples and a test set of $10,000$ examples.
To simulate the multiparty setting, we separate the training set into different parties with biased sample distribution. The resulting four cases are shown as figure~\ref{fig:benchmarking_setting}, and we refer to ~\cite{Wu2019} for a detailed description.

\iftrue
We set the training batch size to be $128$, and the learning rate of all local models to $1$e-$4$ during the local training.
The learning rate is $1$e-$5$ during the calibration step.
All local classifiers have the same $3$-layer convolutional network  
and all local density estimators are the same $12$-layer real non-volume preserving (real NVP) flow network~\cite{Dinh2016}.
The real NVP network is a class of invertible functions and both the forward and its inverse computations are quite efficient. This enables exact and tractable density evaluation.
As For RKME, we set the reduced dimension size to $10$, and the number of generated samples to $200$.
\fi

Firstly, we test the zero-shot composition accuracy of the compared approaches, and if possible, evaluate the subsequent calibration performance. Due to the difference in the calibration mechanism, for HMR, a new neuron is added at the last layer of the classifiers to add reserved class output. In contrast, for our method, the calibration is end-to-end, and the structure of the classifiers is fixed. Therefore our method is more simple to implement. HMR retrains each local model on the augmented data set for one epoch during calibration. As for our method, the calibration operation is performed directly on the global model. Only a batch of $64$ data samples is randomly selected from the training set to perform gradient back-propagation. We run $20$ times for each setting to mitigate randomness and display the standard deviation bands. Experimental results including the centralized ones are visualized in Figure~\ref{fig:benchmarking_result}, and reported in Table~\ref{tab:benchmarking_result}.

From Figure~\ref{fig:benchmarking_result} and Table~\ref{tab:benchmarking_result}, we can see that for sufficient trained local models, our model reuse method achieves relatively superior accuracy from the beginning and outperforms all other model reuse counterparts. At the same time, subsequent calibrations do not improve performance or, even worse, slightly degrade performance. This may be because that the local models are well trained, the further calibration may lead to slight over-fitting. This demonstrates the effectiveness of our method that exploring data density for reuse.

\begin{figure}[!t]
  \centering{
    \begin{subfigure}{0.21\textwidth}
      \centering
      \includegraphics[width=\textwidth]{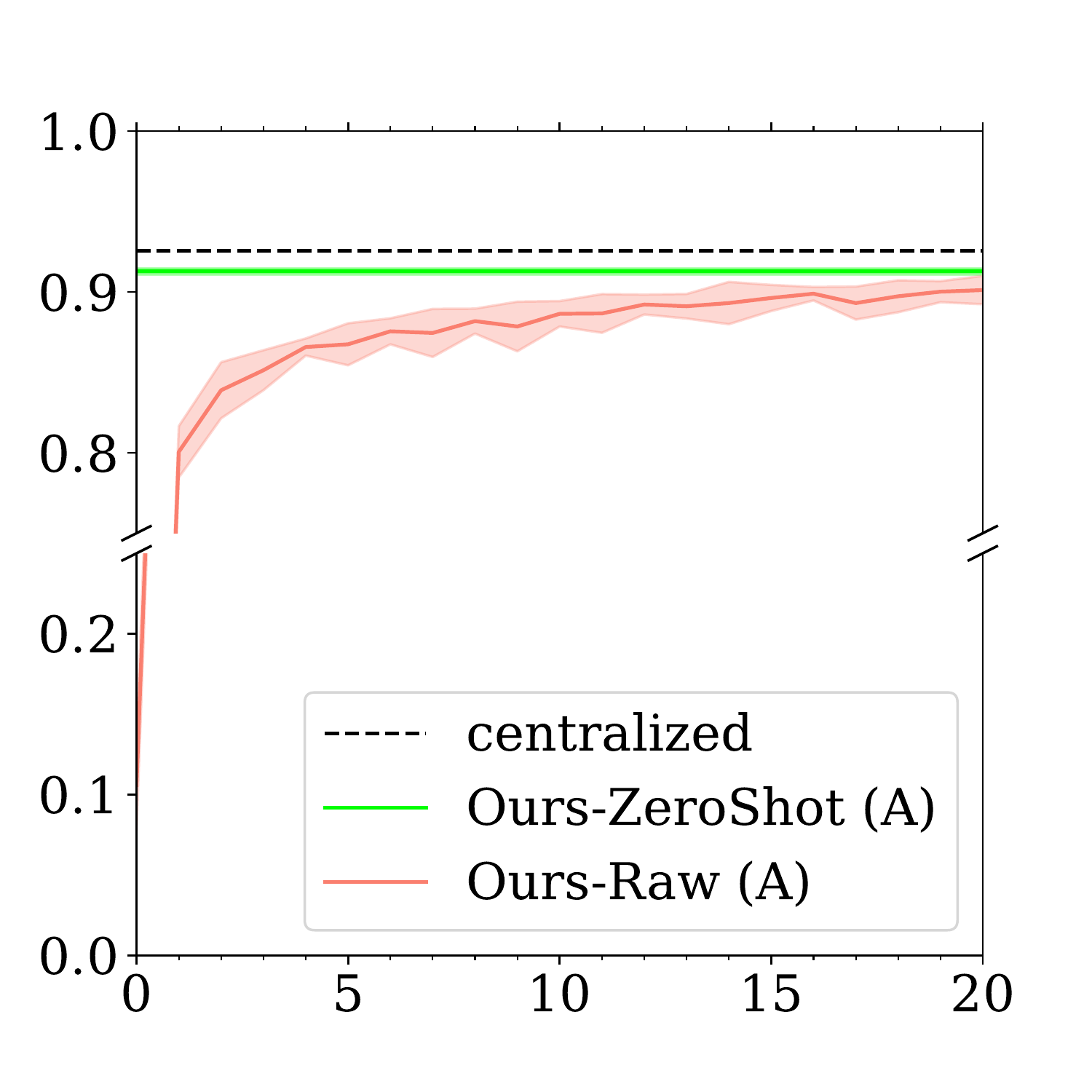}
      \caption{}
    \end{subfigure}
    \begin{subfigure}{0.21\textwidth}
      \centering
      \includegraphics[width=\textwidth]{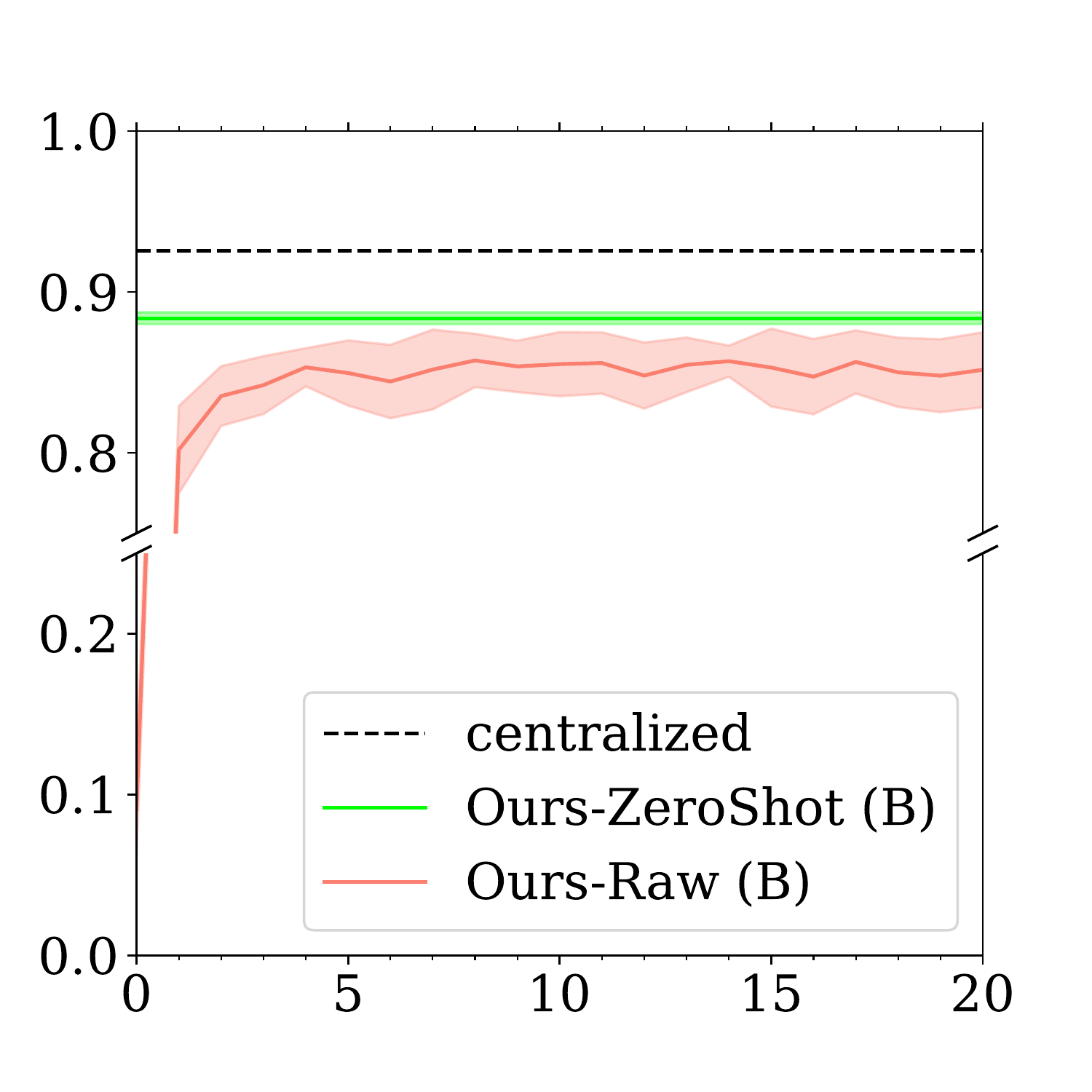}
      \caption{}
    \end{subfigure}
    \\
    \begin{subfigure}{0.21\textwidth}
      \centering
      \includegraphics[width=\textwidth]{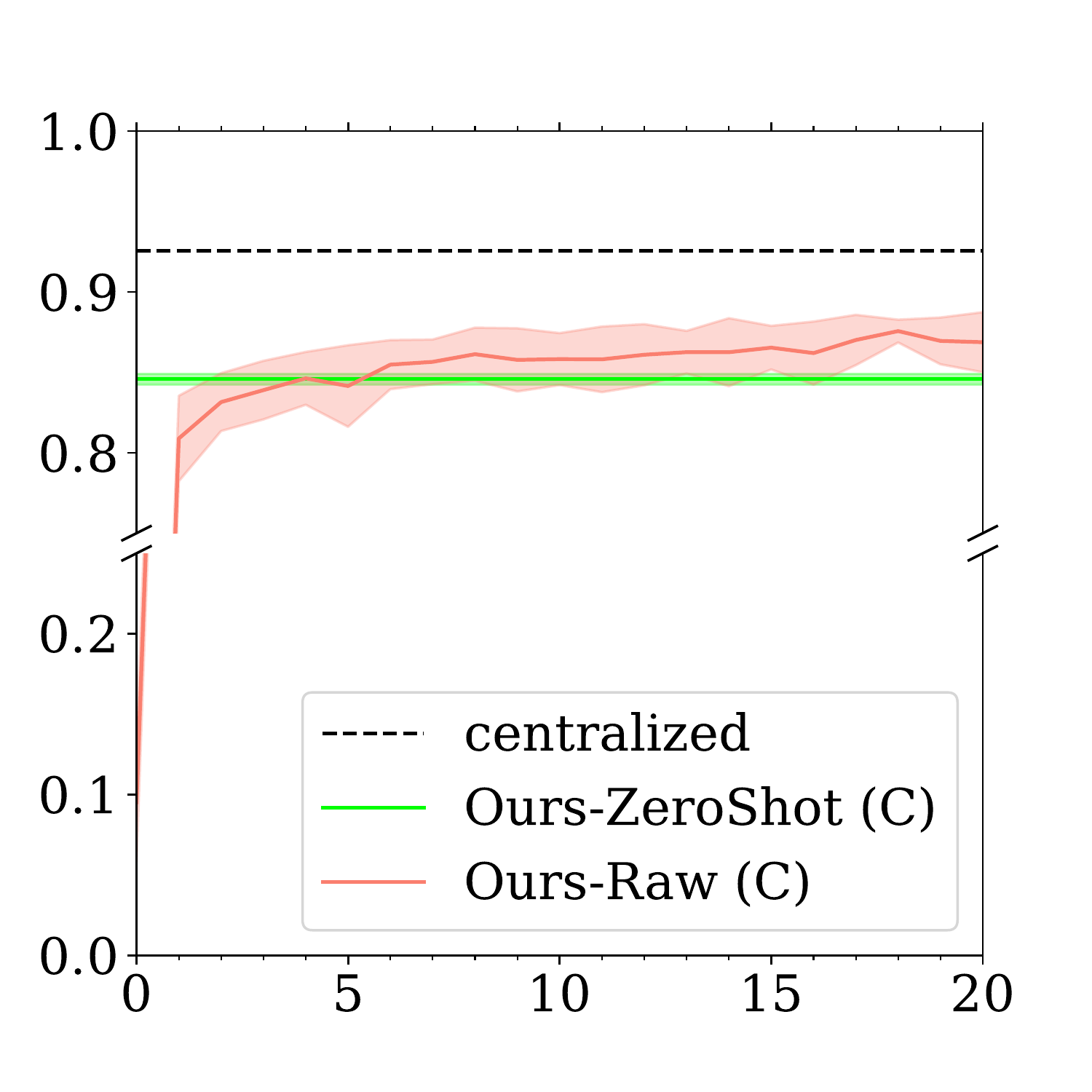}
      \caption{}
    \end{subfigure}
    \begin{subfigure}{0.21\textwidth}
      \centering
      \includegraphics[width=\textwidth]{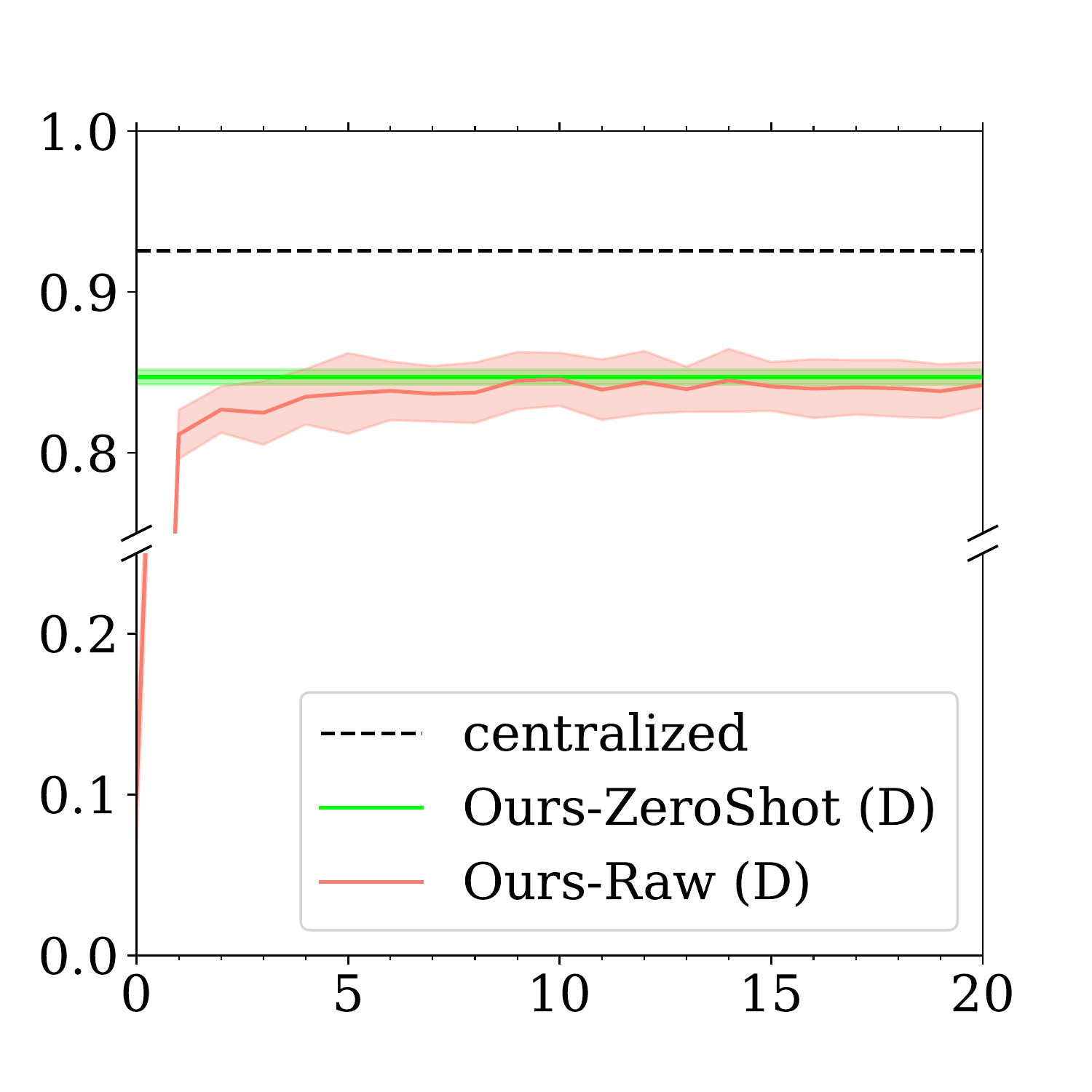}
      \caption{}
    \end{subfigure}
  }
  \caption{A comparison of the Raw models and the ZeroShot models ($\text{avg.}\pm \text{std.}$). Here, the Raw models represent the global models directly calibrated from random initialization without training, and the ZeroShot models represent the zero-shot composition from sufficient trained local models.
  }
  \label{fig:benchmarking_raw}
\end{figure}

Then we demonstrate that our calibration procedure is indeed effective when the local models are not well trained. In particular, we compare the test accuracy of the above zero-shot composition with the global model directly calibrated from random initialization without training (denoted as Raw). We fit the raw global models on the full training set for $20$ epochs with the learning rate set to be $1$e-$4$ and runs $20$ times for each multiparty setting. The results are shown as figure \ref{fig:benchmarking_raw}. We can observe from the results that during our calibration, the Raw model consistently achieves higher performance and eventually converges to the zero-shot composition accuracy.

\section{Conclusions}

In this paper, we propose a novel heterogeneous model reuse method for multiparty learning, where an auxiliary density estimator is designed to help the reuse.
In practical deployment, the pre-trained locals model can be provided as web query services, which is secure and privacy-friendly.
Besides, we propose a multiparty cross-entropy criteria to measure the distance between the true global posterior and the approximation.
Experimental results on both synthetic and benchmark data demonstrate the superiority of our method.
From the results, we mainly conclude that: 1) exploring more prior knowledge on the private local data during the training phase can lead to higher performance during the deployment phase; 2) substantial performance boost can be obtained by using the designed simple and easy-to-implement calibration strategy.
To the best of our knowledge, this is the first work to directly consider the multiparty learning problem from a decision theory perspective.

In the future, we plan to investigate the feature space to characterize and manipulate the knowledge learned from specific data. In addition to the popular image classification task, the proposed method can also be applied to tasks in other fields such as machine translation and speech recognition.

\clearpage
\section*{Acknowledgments}
This work was supported by the National Key Research and Development Program of China under Grant 2021YFC3300200, the Special Fund of Hubei Luojia Laboratory under Grant 220100014, the National Natural Science Foundation of China (Grant No. 62276195 and 62272354).
Prof Dacheng Tao is partially supported by Australian Research Council Project FL-170100117.

\bibliographystyle{named}
\bibliography{ijcai23}

\end{document}